\newtheorem{Lemma}{Lemma}
\newtheorem{theorem}{Theorem}
\crefname{section}{Sec.}{Secs.}
\Crefname{section}{Section}{Sections}
\Crefname{table}{Table}{Tables}
\crefname{table}{Tab.}{Tabs.}
\begin{document}

\title{Contrastive Integrated Gradients: A Feature Attribution-Based Method for Explaining Whole Slide Image Classification}
\author{
Anh Mai Vu$^{1}$ \quad
Tuan L. Vo$^{2}$ \quad
Ngoc Lam Quang Bui$^{3}$ \quad
Nam Nguyen Le Binh$^{4}$\\
Akash Awasthi$^{1}$ \quad
Huy Quoc Vo$^{1}$ \quad
Thanh\mbox{-}Huy Nguyen$^{5}$ \quad
Zhu Han$^{1}$\\
Chandra Mohan$^{1}$ \quad
Hien Van Nguyen$^{1}$\\[0.2cm]
$^{1}$ECE Department, University of Houston\\ $^{2}$Information Technology, HCMC University of Technology and Education\\
$^{3}$VN-UK Institute for Research and Executive Education, The University of Da Nang\\
$^{4}$Ho Chi Minh City University of Science, Vietnam National University\\
$^{5}$Computational Biology Department, Carnegie Mellon University\\[0.1cm]
{\tt\small mvu9@cougarnet.uh.edu}
}

\maketitle

\begin{abstract}
Interpretability is essential in Whole Slide Image (WSI) analysis for computational pathology, where understanding model predictions helps build trust in AI-assisted diagnostics. While Integrated Gradients (IG) and related attribution methods have shown promise, applying them directly to WSIs introduces challenges due to their high-resolution nature. These methods capture model decision patterns but may overlook class-discriminative signals that are crucial for distinguishing between tumor subtypes. In this work, we introduce \textbf{C}ontrastive \textbf{I}ntegrated \textbf{G}radients (\textbf{CIG}), a novel attribution method that enhances interpretability by computing contrastive gradients in logit space. First, CIG highlights class-discriminative regions by comparing feature importance relative to a reference class, offering sharper differentiation between tumor and non-tumor areas. Second, CIG satisfies the axioms of integrated attribution, ensuring consistency and theoretical soundness. Third, we propose two attribution quality metrics, MIL-AIC and MIL-SIC, which measure how predictive information and model confidence evolve with access to salient regions, particularly under weak supervision. We validate CIG across three datasets spanning distinct cancer types: CAMELYON16 (breast cancer metastasis in lymph nodes), TCGA-RCC (renal cell carcinoma), and TCGA-Lung (lung cancer). Experimental results demonstrate that CIG yields more informative attributions both quantitatively, using MIL-AIC and MIL-SIC, and qualitatively, through visualizations that align closely with ground truth tumor regions, underscoring its potential for interpretable and trustworthy WSI-based diagnostics.


 
\end{abstract} 

\section{Introduction} \label{sec:introduction}

\begin{figure*}[t]
    \centering
    \includegraphics[width=0.8\textwidth]{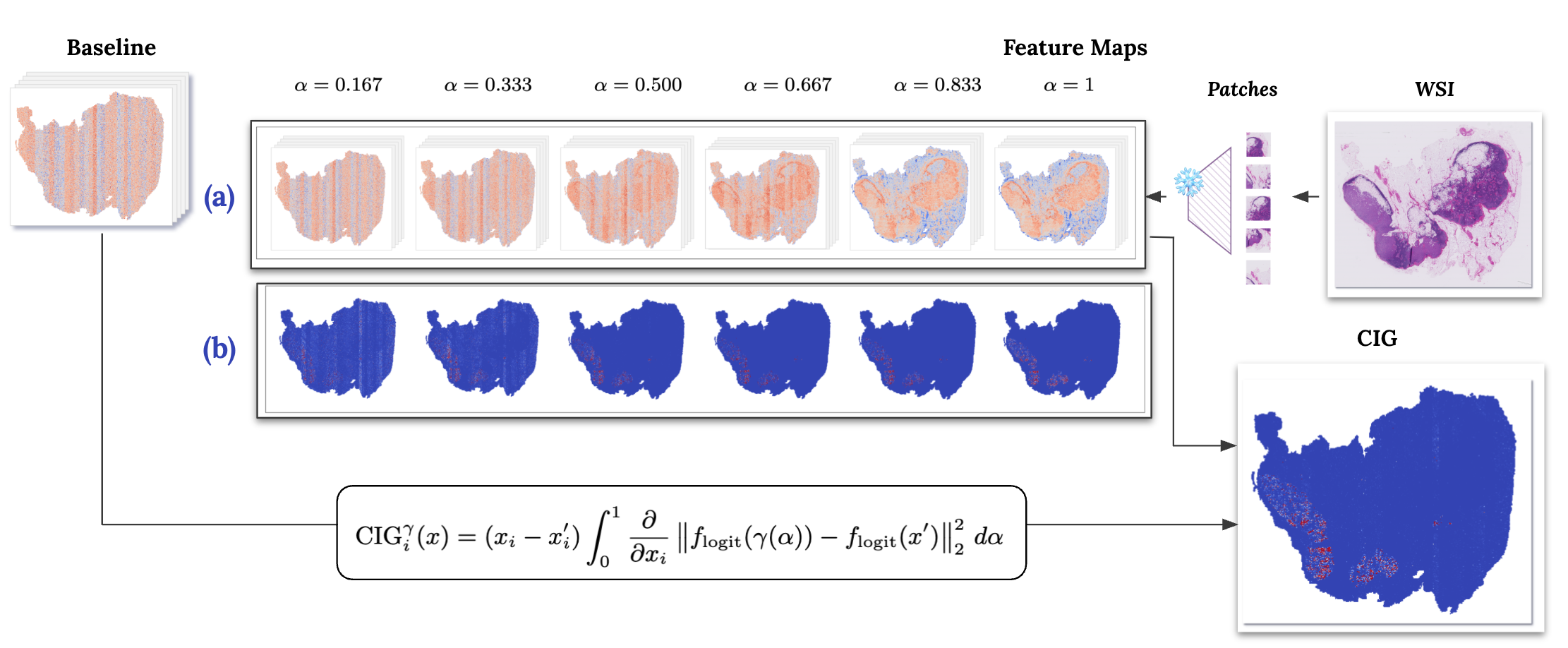}
    \caption{\textbf{Overview of Contrastive Integrated Gradients (CIG).} Given a whole-slide image (WSI), patch-level features are extracted and compared to a baseline sampled from non-tumor regions. An interpolated path \( \gamma(\alpha) = x + \alpha(x' - x) \) is constructed between the input \( x \) and the baseline \( x' \). CIG computes attributions by integrating the gradients of the squared logit difference along this path, where \( f_{\text{logit}}(\cdot) \) denotes the model’s logit output and \( \| \cdot \|_2 \) is the Euclidean norm. Row (a) shows interpolated features at different \( \alpha \) values (\( \alpha = 0.167 \) to \( 1 \)). Row (b) illustrates how contrastive gradients evolve with increasing \( \alpha \), indicating the sensitivity of each feature at each interpolation step. The full attribution is computed by summing the gradients across all \( \alpha \) values and multiplying by the input difference \( x - x' \). The final heatmap (bottom right) shows the CIG attribution result, indicating which regions most strongly influence the model’s decision relative to the baseline.}
    \label{fig:overview}
\end{figure*}  

Pathology plays a vital role in modern medicine, providing the foundation for accurate diagnosis and prognosis across a wide spectrum of diseases. At the core of pathological assessment lies the rigorous examination of whole slide images (WSIs), where pathologists must carefully analyze cellular composition and tissue architecture to make reliable assessments, such as identifying tumors. This process, while fundamental to patient care, faces significant challenges, including substantial time requirements and notable inter-observer variability among specialists interpreting the same visual data. Recent advances in artificial intelligence and deep learning offer promising solutions to these challenges by enabling the development of computational tools capable of pre-segmenting regions of interest within WSIs. These tools can help guide pathologists’ attention to critical areas, potentially reducing diagnostic inconsistencies and improving workflow efficiency. 

Recent advances in WSI analysis have introduced deep learning models for both classification and segmentation. Classification methods often follow the Multiple Instance Learning (MIL) paradigm, including ABMIL \cite{ilse2018attention-abmil}, CLAM \cite{clam_lu2021data}, TransMIL \cite{shao2021transmil}, HIPT \cite{chen2022scaling-hipt}, DSMIL \cite{li2021dual-dsmil}, and more recent models like CAMIL \cite{camil_fourkioti2024camil}, DTFD-MIL \cite{dtfd_zhang_dtfd-mil_2022}, SNUFFY \cite{snuffy_afarinia2024snuffyefficientslideimage}, and DGMIL \cite{qu2022dgmil}. Segmentation approaches such as generalized U-Net variants \cite{wsi_seg_khened2021generalized}, DHUNet \cite{unet_seg_3_wang2023dhunet}, and multi-scale methods \cite{seg2_XU2016214} focus on delineating tissue regions. While these models perform well, they provide limited insight into the histological patterns behind their predictions. This has motivated interest in attribution methods that help connect model outputs to relevant tissue regions for improved interpretability.

According to \cite{benou2025showandtell}, explainable AI (XAI) methods in computer vision can be grouped along two dimensions: \textit{heatmap-based vs. concept-based}, where the former highlights \textit{where} a model focuses and the latter explains \textit{what} it relies on, and \textit{post-hoc vs. ante-hoc}, where post-hoc methods generate explanations after training (e.g., Labo \cite{yang2023languageinabottle}, TCAV \cite{kim2018interpretability}) and ante-hoc methods integrate interpretability into the model (e.g., Concept Bottleneck Models \cite{koh2020concept, huang2024concept}). In computational pathology, two complementary directions have emerged: ante-hoc methods such as SI-MIL \cite{kapse2024si-mil}, Concept-MIL \cite{sun2025label}, and GEKCO \cite{kapse2025gecko}, which provide global interpretability through text–image alignment or handcrafted features, and post-hoc attribution methods, which generate localized heatmaps of tumor-associated regions within WSIs. Our work advances post-hoc methods by improving the fidelity and accuracy of visualizations in computational pathology.

Ensuring interpretability in deep learning models is crucial for WSI analysis in computational pathology; it is crucial for building trust in clinical applications. Gradient-based methods such as Gradient × Input \cite{grad_x_input_ancona2017towards}, EG \cite{eg_erion2020improvingperformancedeeplearning}, Guided IG \cite{gig_kapishnikov_guided_2021}, Grad-CAM \cite{grad_cam_selvaraju2017grad}, IG \cite{ig_sundararajan_axiomatic_2017}, and IDG \cite{idg_walker_integrated_2023} are commonly used for explainability. However, these typically operate in image space and may be less effective in models that rely on learned embeddings.


Beyond gradient-based approaches, model-agnostic methods like SHAP offer a complementary perspective on feature attribution. SHAP \cite{shap} is a widely used, theoretically grounded approach to feature attribution that satisfies key axioms for fair and consistent explanations. Approaches such as LIME \cite{lime2016ribeiro} and DeepLIFT \cite{DeepLIFT2017Shrikumar} offer valuable tools for local interpretability and backpropagation-based attribution, respectively. SHAP builds upon these ideas by providing a unified framework grounded in Shapley values \cite{shapley}. To improve scalability, Kernel SHAP \cite{shap} introduces a kernel-based weighting strategy with linear regression over sampled feature subsets, while Deep SHAP \cite{shap} extends SHAP to deep models by incorporating DeepLIFT’s backpropagation mechanism. While these advancements enhance interpretability, optimizing SHAP for large-scale medical image analysis needs further improvement, particularly in balancing computational efficiency with robust feature attribution.  



Attribution methods assign importance scores to input features based on their contributions to a model’s output. Integrated Gradients (IG) \cite{IG} improves upon basic saliency maps by integrating gradients along a straight-line path from a baseline to the input, and satisfies desirable properties such as sensitivity, completeness, and implementation invariance. However, its effectiveness heavily depends on the choice of baseline, which can be ambiguous in domains like pathology. Variants like EG \cite{EG} improve robustness by averaging over multiple baselines, while IDG \cite{idg_walker_integrated_2023} focuses on decision-critical points rather than path integration. Other extensions such as BIG \cite{BIG} and AGI \cite{AGI} refine baselines or explore nonlinear paths for better precision, though often at higher computational cost. 
Despite these advances, most attribution methods operate in image space and may highlight visually salient but class-irrelevant features, limiting their interpretability for WSI classification. To address this, we propose \textbf{Contrastive Integrated Gradients}, a logit-space attribution method that captures contrastive information between inputs and reference baselines. By emphasizing class-discriminative features those that distinguish one class from others CIG aims to provide more meaningful and focused attributions in computational pathology. In summary, our contributions are as follows: 
\begin{itemize}
    \item We introduce CIG, a novel Integrated Gradients-based path attribution method that enhances interpretability by capturing contrastive information in logit space. By emphasizing class-discriminative gradients, CIG offers clearer insights into model predictions.
    \item We demonstrate that CIG satisfies the axioms of integrated attribution methods, ensuring consistent and principled feature-space attributions that enhance the reliability and interpretability of model explanations. 
    \item We incorporate CIG into a Multiple Instance Learning (MIL) framework from 
    \cite{kapishnikov2019xrai} for WSI classification, enabling feature-space attributions and effective visualization of tumor regions. To evaluate attribution quality in this weakly supervised setting, we adapt the PIC framework and introduce two new metrics, \textit{MIL-AIC} and \textit{MIL-SIC}, to assess class consistency and confidence progression.
\end{itemize}

\section{Background}\label{sec:background}
\textbf{Path Integrated Gradients (PathIG)} \cite{ig_sundararajan_axiomatic_2017}, based on Shapley Theory \cite{shap}, compute feature importance by integrating gradients along a predefined path from a baseline \( x' \) and an input \( x \) in the input space. 
The feature attribution is computed as:   
\begin{equation}
    \text{PathIG}_i^\gamma (x) = \int_0^1 \frac{\partial F(\gamma(\alpha))}{\partial \gamma_i(\alpha)} \cdot \frac{\partial \gamma_i(\alpha)}{\partial \alpha} \, d\alpha.
\end{equation}  
\noindent 
Here, \( F: \mathbb{R}^n \to [0,1] \) represents a deep network, \( \gamma(\alpha): [0,1] \to \mathbb{R}^n \) is a path function, and $\frac{\partial F(\gamma(\alpha))}{\partial \gamma_i(\alpha)} = \nabla_i F(\gamma(\alpha))$ is the gradients of the model output with respect to the \( i \)th feature of input.

\textbf{Integrated Gradients (IG)} is a special case of PathIG when $\gamma(\alpha)$ is a straightline path, i.e, $\gamma(\alpha) = x' + \alpha (x - x')$ for \( \alpha \in [0,1] \). Since $\frac{\partial \gamma_i(\alpha)}{\partial \alpha} = (x_i - x'_i)$, the formula of IG is given by:
\begin{equation}
    \text{IG}_i(x) = (x_i - x'_i)\int_0^1 \frac{\partial F(x' + \alpha (x - x'))}{\partial x_i} \, d\alpha.
\end{equation} 
This integral can be approximated using a Riemann sum: 
\begin{equation}
    \text{IG}_i^{\text{approx}}(x) = (x_i - x'_i) \times \sum_{k=1}^{m} \frac{\partial F(x' + \frac{k}{m} (x - x'))}{\partial x_i} \times \frac{1}{m},
\end{equation} 
where \( m \) is the number of steps in the approximation. This summation efficiently estimates the integral and can be computed in a loop over \( m \) points along the path. 

\textbf{Axiomatic Properties of Integrated Gradients.} IG adheres to several important axioms of attribution methods. First, IG satisfies \textbf{Completeness} which ensures that attributions sum to the difference in model output between the input \( x \) and the baseline \( x' \): 
\begin{equation*}
    \sum_{i=1}^{n} \text{IG}_i(x) = F(x) - F(x'),
\end{equation*}
where $n$ is the number of features. Additionally, IG also upholds the \textbf{Sensitivity axiom}, which states that if only one feature differs between the input and the baseline, and this difference causes a change in the model output, then all of the attributions should go to that feature. Furthermore, IG maintains the \textbf{Implementation Invariance}, guaranteeing that attributions remain consistent across functionally equivalent networks since they rely solely on the gradients of the function represented by the network.





\section{Methodology}
\subsection{Contrastive Integrated Gradients} 
We introduce the Contrastive Integrated Gradients, a novel path-based attribution method designed to highlight key distinguishing features, particularly in differentiating tumors from non-tumor regions.

\textbf{Definition.} Given an input \( x \) and a reference \( x' \), the attribution for the \( i \)th feature is computed as:  
 \begin{equation}
    \text{CIG}_i^{\gamma}(x) = (x_i - x_i') \int_{0}^{1} \frac{\partial \| f_{\text{logit}}(\gamma(\alpha)) - f_{\text{logit}}(x')\|_2^2}{\partial x_i} d\alpha,
\end{equation}
where \( \gamma(\alpha) \) defines a straight line path between $x$ and $x'$, \( f_{\text{logit}}(\cdot) \) represents the model’s logit layer, and $\|\cdot\|_2$ denotes the $\ell_2$ norm.

Unlike IG, which attributes importance using gradients from the input, CIG captures contrastive information by computing the difference between the logits of interpolated inputs and a baseline reference \( x' \). CIG integrates gradients along a path in logit space, measuring how the model’s decision boundary evolves relative to the reference. This allows the method to capture contrastive information, making it effective in distinguishing between categories.

The following Lipschitz‐based bound shows how large a single‐feature CIG can be, given the input change and the model’s smoothness.

\begin{Lemma}
\textbf{CIG Bound.} 
Assume \(f_{\mathrm{logit}}:\mathbb{R}^{n}\!\to\!\mathbb{R}^{m}\) be \(L\)-\emph{Lipschitz} in the Euclidean norm, i.e,  
\begin{equation*}
    \|f_{\mathrm{logit}}(x)-f_{\mathrm{logit}}(x')\|_2
      \;\le\;L\,\|x-x'\|_2
      \qquad\forall\,x,x'\in\mathbb{R}^{n}.
\end{equation*}
Then for every feature \(i\in\{1,\dots,n\}\),
\begin{equation}
    \bigl|\operatorname{CIG}^{\gamma}_{i}(x)\bigr|
      \;\le\;\tfrac{2}{3}\,L^{2}\,\|x-x'\|_2\,|x_i-x_i'|.
\end{equation}
\end{Lemma}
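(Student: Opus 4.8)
The plan is to bound the integrand of $\operatorname{CIG}^{\gamma}_{i}(x)$ pointwise in $\alpha$ and then integrate. Write $\gamma(\alpha)=x'+\alpha(x-x')$ for the straight-line path and set $u(\alpha):=f_{\mathrm{logit}}(\gamma(\alpha))-f_{\mathrm{logit}}(x')$, so the integrand is the $x_i$-derivative of $\|u(\alpha)\|_2^2$. First I would expand this derivative through the path. Since $\gamma_j(\alpha)=x_j'+\alpha(x_j-x_j')$ depends on $x_i$ only through $\gamma_i$, with $\partial\gamma_i(\alpha)/\partial x_i=\alpha$, the chain rule gives
\[
\frac{\partial \|u(\alpha)\|_2^2}{\partial x_i}
   = 2\alpha \left\langle u(\alpha),\, \frac{\partial f_{\mathrm{logit}}}{\partial y_i}(\gamma(\alpha)) \right\rangle,
\]
where $\partial f_{\mathrm{logit}}/\partial y_i$ denotes the partial of $f_{\mathrm{logit}}$ with respect to its $i$th input coordinate, i.e.\ the $i$th column $J e_i$ of the Jacobian $J$. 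The factor $\alpha$ produced here is precisely what will later yield the constant $\tfrac23$ rather than $1$, so it is essential to keep track of it.

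Next I would apply the Cauchy--Schwarz inequality to the inner product and bound each factor separately. For the gradient factor I would use the standard fact that an $L$-Lipschitz map has Jacobian of operator norm at most $L$ wherever it is differentiable; hence the $i$th column obeys $\|\partial f_{\mathrm{logit}}/\partial y_i\|_2 = \|J e_i\|_2 \le \|J\|_{\mathrm{op}}\,\|e_i\|_2 \le L$. For the other factor I would apply the Lipschitz hypothesis directly to $u(\alpha)$ together with $\gamma(\alpha)-x'=\alpha(x-x')$, obtaining $\|u(\alpha)\|_2 \le L\,\|\gamma(\alpha)-x'\|_2 = L\alpha\,\|x-x'\|_2$. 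Combining these two bounds with the previous display yields the pointwise estimate
\[
\left|\frac{\partial \|u(\alpha)\|_2^2}{\partial x_i}\right|
   \;\le\; 2\alpha \cdot L \cdot L\alpha\,\|x-x'\|_2
   \;=\; 2L^2\alpha^2\,\|x-x'\|_2.
\]

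Finally I would integrate over $\alpha\in[0,1]$, using $\int_0^1 \alpha^2\,d\alpha=\tfrac13$, and pull out the prefactor $|x_i-x_i'|$, which produces $|\operatorname{CIG}^{\gamma}_{i}(x)| \le \tfrac23 L^2\,\|x-x'\|_2\,|x_i-x_i'|$, as claimed. The main obstacle I anticipate is the differentiability bookkeeping: the Lipschitz assumption alone does not guarantee that $f_{\mathrm{logit}}$ is everywhere differentiable, so I would either strengthen the hypothesis to $C^1$ (natural for a network with smooth activations) or invoke Rademacher's theorem to justify that $J$ exists almost everywhere along the path with $\|J\|_{\mathrm{op}}\le L$, which suffices since the attribution is defined by an integral. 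A secondary point worth stating cleanly is the passage from the scalar Lipschitz constant to the columnwise bound $\|J e_i\|_2 \le L$, which is immediate from the definition of the operator norm but should be made explicit.
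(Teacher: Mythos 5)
Your proposal is correct and follows essentially the same route as the paper's proof: differentiate $\|u(\alpha)\|_2^2$ through the path to pick up the factor $2\alpha$, bound $\|u(\alpha)\|_2\le L\alpha\|x-x'\|_2$ by the Lipschitz hypothesis and the gradient column by $L$, then integrate $\alpha^2$ to obtain $\tfrac23$. Your added remark on differentiability (invoking Rademacher or a $C^1$ assumption) is a legitimate refinement of a point the paper leaves implicit, but it does not change the argument.
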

\begin{proof} 
Set $r(\alpha)=f_{\mathrm{logit}}\!\bigl(\gamma(\alpha)\bigr)-f_{\mathrm{logit}}(x'),  
 g(\alpha)=\|r(\alpha)\|_2^{2}.$
Because $g(\alpha)=r(\alpha)^{\!\top}r(\alpha)$,
\begin{align*}
    \frac{\partial g(\alpha)}{\partial x_i} &= 2\,r(\alpha)^{\!\top}\,\frac{\partial r(\alpha)}{\partial x_i}\\
    &=2\,r(\alpha)^{\!\top}\,\alpha\,
        \frac{\partial f_{\mathrm{logit}}}{\partial x_i}
        (\gamma(\alpha)).
\end{align*}
Because $f_{\text{logit}}$ is \(L\)-\emph{Lipschitz}, with $x=\gamma(\alpha)$:
\begin{equation*}
    \|r(\alpha)\|_2
      \le L\|\gamma(\alpha)-x' \|_2 \le L\,\alpha\|x-x'\|_2.
\end{equation*}
Moreover, since $\bigl\|\tfrac{\partial f_{\mathrm{logit}}}{\partial x_i}\bigr\|_2\le L$,
\begin{equation*}
    \Bigl|\tfrac{\partial g(\alpha)}{\partial x_i}\Bigr|
      \le 2\alpha\,(L\alpha\|x-x'\|_2)\,L
      =2L^{2}\alpha^{2}\|x-x'\|_2.
\end{equation*}
The CIG along~\(\gamma\) is:
\begin{equation*}
    \operatorname{CIG}^{\gamma}_{i}(x)
      =(x_i-x_i')
        \int_{0}^{1}
            \frac{\partial}{\partial x_i}\,g(\alpha)\,d\alpha.
\end{equation*}
Therefore,
\begin{align*}
    \bigl|\operatorname{CIG}^{\gamma}_{i}(x)\bigr|
   &\le |x_i-x_i'|\,
        \int_{0}^{1}2L^{2}\alpha^{2}\|x-x'\|_2\,d\alpha\\
   &\le\tfrac23\,L^{2}\,\|x-x'\|_2\,|x_i-x_i'|.
\end{align*}
\end{proof}
This bound quantitatively links a feature’s CIG attribution to how much that feature and the full input shift, scaled by the network’s Lipschitz smoothness.

\subsection{Axiomatic Properties of Contrastive Integrated Gradients} 
\begin{theorem}[Completeness Axiom]
The total attribution must equal the change in the model’s representation in logit space. Mathematically, this is given by: 
\begin{equation}
    \sum_{i=1}^{n} \text{CIG}_i^{\gamma}(x) = D(x) - D(x'),
\end{equation}

where $D(x) = \left\|f_{\text{logit}}(x) - f_{\text{logit}}(x')\right\|_2^2.$
\end{theorem}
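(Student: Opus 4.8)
The plan is to mirror the classical completeness proof for Integrated Gradients, reducing the claim to the Fundamental Theorem of Calculus applied to a single scalar field along the interpolation path. First I would introduce the scalar potential $D(y) = \|f_{\text{logit}}(y) - f_{\text{logit}}(x')\|_2^2$ as a function of $y \in \mathbb{R}^n$, so that the integrand in the $\text{CIG}$ definition is exactly the $i$-th partial derivative $\partial D/\partial y_i$ evaluated at the path point $\gamma(\alpha)$. The point of this reframing is that $\text{CIG}$ is then the \emph{path-integrated gradient of the scalar function} $D$ (rather than of the vector-valued $f_{\text{logit}}$ itself). Once $\text{CIG}$ is recognized as $\text{PathIG}$ applied to $D$, the axiom follows from the same telescoping mechanism that gives $\text{IG}$ its completeness in the Background section.

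Next I would exploit the straight-line path $\gamma(\alpha) = x' + \alpha(x - x')$, for which $d\gamma_i/d\alpha = x_i - x_i'$. This lets me reabsorb the prefactor $(x_i - x_i')$ into the integral and write each attribution in the symmetric form $\text{CIG}_i^{\gamma}(x) = \int_0^1 \frac{\partial D}{\partial y_i}(\gamma(\alpha)) \, \frac{d\gamma_i(\alpha)}{d\alpha} \, d\alpha$. Summing over $i$ and interchanging the finite sum with the integral, the inner expression $\sum_{i} \frac{\partial D}{\partial y_i}(\gamma(\alpha)) \, \frac{d\gamma_i(\alpha)}{d\alpha}$ is precisely the chain-rule expansion of the total derivative $\frac{d}{d\alpha} D(\gamma(\alpha))$. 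Applying the Fundamental Theorem of Calculus then collapses the integral to $D(\gamma(1)) - D(\gamma(0)) = D(x) - D(x')$, the claimed identity. I would remark in passing that $D(x') = 0$ since $f_{\text{logit}}(x') - f_{\text{logit}}(x') = 0$, so the right-hand side equals $D(x)$, but I would keep the stated difference form to match the canonical completeness statement.

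The main obstacle is less the algebra than pinning down the correct reading of the symbol $\partial/\partial x_i$ in the definition so that the chain rule telescopes cleanly: the integrand must be the gradient of $D$ with respect to the coordinates of the path point $\gamma(\alpha)$, exactly as in the $\text{IG}$ formula of the Background, so that the factors $(x_i - x_i')$ act as $d\gamma_i/d\alpha$ and reassemble into one total derivative. I would also record the mild regularity hypothesis the argument needs, namely differentiability of $f_{\text{logit}}$ and hence continuous differentiability of $\alpha \mapsto D(\gamma(\alpha))$ on $[0,1]$, which is what licenses both the chain rule and the Fundamental Theorem of Calculus.
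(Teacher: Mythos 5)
Your proposal is correct and follows essentially the same route as the paper's own proof: both define the scalar potential $D$ (the paper calls it $g(\alpha)$ along the path), expand $\tfrac{d}{d\alpha}D(\gamma(\alpha))$ via the chain rule so the factors $(x_i-x_i')$ play the role of $d\gamma_i/d\alpha$, exchange the finite sum with the integral, and invoke the Fundamental Theorem of Calculus to telescope to $D(x)-D(x')$. Your added remarks on the regularity hypothesis and on $D(x')=0$ are consistent with what the paper states around the theorem.
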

Since $D(x')=0$, we can generally rewrite the completeness axiom as follows:
\begin{equation}
    \sum_{i=1}^{n} \text{CIG}_i^\gamma(x) = \| f_{\text{logit}}(x) - f_{\text{logit}}(x') \|_2^2,
\end{equation}

\begin{proof}

Let the straight line path from \(x'\) to \(x\) be
\[
\gamma(\alpha)=x'+\alpha\,(x-x'),\qquad \alpha\in[0,1].
\]
Set $g(\alpha)=\|f_{\mathrm{logit}}\bigl(\gamma(\alpha)\bigr)-f_{\mathrm{logit}}(x')\|_2^2$.
Using the chain rule:
\begin{align*}
    \frac{d}{d\alpha}g(\alpha)&=\nabla g_\gamma(\gamma(\alpha))\cdot\frac{d\gamma(\alpha)}{d\alpha} \\
    &=\nabla g_\gamma(\gamma(\alpha))\cdot(x-x').
\end{align*}
Writing component-wise, we obtain:
\begin{equation}\label{eq:dg}
    \frac{d}{d\alpha}g(\alpha)
      =\sum_{i=1}^{n}(x_i-x_i')\,
        \frac{\partial}{\partial x_i}g(\alpha).
\end{equation}

By definition of the CIG,
\begin{equation*}
    \operatorname{CIG}^{\gamma}_{i}(x)
       =(x_i-x_i')\int_{0}^{1}
         \frac{\partial}{\partial x_i}
         g(\alpha)\,d\alpha.
\end{equation*}\\
Summing over \(i\) and exchanging sum with integral yields
\begin{align*}  \sum_{i=1}^{n}\operatorname{CIG}^{\gamma}_{i}(x)&=\sum_{i=1}^{n}(x_i-x_i')\int_{0}^{1}\frac{\partial}{\partial x_i}g(\alpha)\,d\alpha\\
    &=\int_{0}^{1}\sum_{i=1}^{n}(x_i-x_i')\frac{\partial}{\partial x_i}g(\alpha)\,d\alpha. 
\end{align*}  
Equation~\eqref{eq:dg} allows us to simplify the expression to:
\begin{align*}
\sum_{i=1}^{n}\operatorname{CIG}^{\gamma}_{i}(x)    &=\int_{0}^{1}dg(\alpha)=g(1)-g(0)\\
    &=\bigl\|f_{\mathrm{logit}}(x)-f_{\mathrm{logit}}(x')\bigr\|_{2}^{2}.
\end{align*}
\end{proof}  

CIG satisfies the \textbf{Sensitivity Axiom} by assigning nonzero attributions when \( x_i \neq x_i' \) and \( f_{\text{logit}} \) depends on \( x_i \), as the gradients are generally nonzero along the path. Conversely, if \( f_{\text{logit}} \) is independent of \( x_i \), the gradients remain zero, resulting in zero attribution. Thus, CIG correctly distinguishes relevant and irrelevant features.

It also adheres to the \textbf{Implementation Invariance Axiom}, as its attributions are solely based on the model’s output function and gradients, ensuring consistency regardless of implementation specifics.

\subsection{Incorporating CIG into the WSI Analysis Pipeline}
\subsubsection{Overview of the MIL-based Classification Pipeline for Feature Attribution} 
In a typical WSI classification pipeline, each WSI is divided into patches at a fixed magnification level for deep learning processing. These patches are passed through a feature extractor, such as a Vision Transformer or CNN, to generate embeddings that capture meaningful information. The embeddings are treated as instances in an MIL setup, where each slide is a bag of patches. For a classification task, a WSI is labeled positive if any of its patches are positive (e.g., contain tumor). Since patch-level labels are unavailable, MIL classifiers use pooling mechanisms such as attention-based pooling to aggregate patch-level information and produce slide-level predictions.This enables weakly supervised learning while also identifying informative regions. Attribution methods like CIG then use gradients with respect to the feature embeddings each corresponding to a specific patch to highlight the regions that contribute most to the final prediction.


\subsubsection{Design of the Attribution Baseline} 

Selecting a suitable reference input (or baseline) is critical in gradient-based attribution methods like Integrated Gradients (IG), as it defines the reference point for measuring feature importance. In natural images, common baselines include black, white, or average images. However, in our setting where inputs are bags of WSI patches, we found common strategies to be limited. A zero vector baseline can cause out-of-distribution issues, as it may carry unintended meaning in the embedding space. Using the dataset mean offers stability but introduces semantic bias, often favoring the dominant class. Sampling from the dataset distribution or using random patches suffers from similar problems, such as semantic leakage or reduced discriminativeness when input and baseline belong to the same class.

To overcome this, we use a \textbf{baseline from the opposite class}. For tumor-positive slides, we sample patches from non-tumor slides as the reference. This strategy better captures the meaningful differences that drive model predictions, resulting in clearer and more interpretable saliency maps. It benefits both standard IG and contrastive attribution methods by enhancing sensitivity to class-specific features and aligning more closely with the model’s decision boundaries.

\subsubsection{Attribution Quality Assessment } 

Evaluating attribution quality is especially challenging in WSI analysis, where ground truth explanations are often unavailable. In natural image tasks, evaluation typically assumes that high-saliency pixels contribute most to model predictions~\cite{kapishnikov2019xrai, petsiuk2018rise}. 


\textbf{Performance Information Curves (PICs).}\cite{kapishnikov2019xrai} evaluate saliency by progressively reintroducing pixels to a blurred image in saliency order. At each step, image information is estimated using compression-based entropy (e.g., WebP size), and model performance is recorded. Two main variants are used: \textit{Accuracy Information Curve (AIC)}, which plots classification accuracy, and \textit{Softmax Information Curve (SIC)}, which tracks confidence. RISE\cite{petsiuk2018rise} introduces \textit{Insertion} and \textit{Deletion} metrics that measure how confidence changes as pixels are added or removed based on attribution. Better saliency maps lead to faster confidence gain in insertion and sharper drop in deletion. 

\textbf{Evaluating Attribution in Weakly Supervised Setting in WSI Classification.} Standard methods like PIC and RISE may not fully apply to WSI settings, where only slide-level labels are available. In Multiple Instance Learning (MIL), a slide is labeled positive if at least one of its patches is positive. Predictions can change abruptly after observing just a few key regions, challenging the common assumption of gradual or proportional prediction shifts. 

To address the limitations of standard PIC in this context, we adapt the evaluation by redefining the x-axis. We start with \textit{control features}, which are patch features sampled from slides of the opposite class (or classes), for example, using non-tumor slide patches when evaluating a tumor case. These typically yield low-confidence predictions for the current slide. We then progressively replace them with features from the target slide, ranked by attribution scores, introducing the most salient features first. 

We structure the evaluation using a set of \textit{information-level bins} defined by two complementary strategies: \textit{Top-$k$ patches:} We evaluate a range of \textit{top-$k$} most salient patches (e.g., $k = 1, 2, \dots, 10, 15, \dots, 500$), which captures how quickly the model's prediction changes as the most informative regions are introduced. And \textit{Saliency thresholds:} We assess prediction behavior as increasing portions of the image are revealed (e.g., top 20\%–99\% of patches), capturing completeness and late-stage transitions. These two strategies are merged into a single ordinal sequence, where each step reflects the order in which salient information is introduced. This design is well-suited to MIL-based WSI classification, as it captures both early, decisive shifts and gradual transitions.

We adapt the standard PIC metric to better suit weakly supervised WSI classification by introducing two specialized evaluation curves, using the \textit{information-level bins} described above. The \textbf{MIL-Accuracy Information Curve (MIL-AIC)} tracks whether the model predicts the correct slide-level label (e.g., tumor vs. non-tumor) as more high-saliency patches are gradually introduced. It measures classification accuracy at each bin, where the output is binary (correct or incorrect), helping assess how quickly the model can make the right decision. The \textbf{MIL-Softmax Information Curve (MIL-SIC)} measures the model’s softmax confidence in the correct class as informative patches are revealed, providing a finer-grained view of the model’s certainty over time. Together, MIL-AIC and MIL-SIC offer a robust and interpretable way to evaluate attribution quality in the MIL setting, where only weak slide-level supervision is available. 


\noindent \textbf{Note on applicability.} This evaluation is best suited for \textit{tumor-positive} slides, where predictions shift sharply toward the tumor class after a few key patches are added. By starting from control (normal) features and progressively introducing tumor patches ranked by saliency, the setup effectively captures how well attribution highlights class-discriminative regions. In contrast, normal slides begin with tumor-like features and require removing nearly all of them before the prediction changes, making metrics like AUC less meaningful. Therefore, we focus our evaluation on tumor-positive cases to ensure consistency and interpretability.

\section{Experiments and Results}
\subsection{Dataset}

We evaluate our method on three publicly available pathology datasets: \textbf{CAMELYON16}, \textbf{TCGA-Renal}, and \textbf{TCGA-Lung}. The CAMELYON16 dataset focuses on lymph node sections for breast cancer metastasis detection, containing slides labeled as either \textbf{Tumor} (metastatic tissue) or \textbf{Non-Tumor} (benign tissue). The TCGA-Renal dataset includes three renal cancer subtypes: \textbf{KIRC} (Kidney Renal Clear Cell Carcinoma), \textbf{KIRP} (Kidney Renal Papillary Cell Carcinoma), and \textbf{KICH} (Kidney Chromophobe (Carcinoma)). Similarly, the TCGA-Lung dataset covers two major types of lung cancer: \textbf{LUAD} (Lung Adenocarcinoma) and \textbf{LUSC} (Lung Squamous Cell Carcinoma). These datasets provide diverse diagnostic settings to assess the generalizability of our evaluation framework.
\subsection{Experiment Setting}  
 \begin{table}[!htbp]
    \centering
    \renewcommand{\arraystretch}{1.2}
    \setlength{\tabcolsep}{4pt}
    \begin{tabular}{c|l|cc}
        \toprule
        \textbf{Classifier} & \textbf{Method} & \textbf{MIL-AIC}~↑ & \textbf{MIL-SIC}~↑ \\
        \midrule
        \multicolumn{2}{c|}{Random} & 0.356 ± 0.198 & 0.365 ± 0.199 \\
        \midrule
        \multirow{5}{*}{CLAM}
            & Gradient & 0.893 ± 0.255 & 0.898 ± 0.240 \\
            & IG\cite{IG} & 0.891 ± 0.261 & 0.896 ± 0.243 \\
            & IDG\cite{idg_walker_integrated_2023} & 0.737 ± 0.327 & 0.739 ± 0.322 \\
            & EG \cite{EG} & 0.891 ± 0.255 & 0.895 ± 0.242 \\
            & \textbf{CIG (ours)} & \textbf{0.950 ± 0.166} & \textbf{0.945 ± 0.128} \\
        \midrule
        \multirow{5}{*}{MLP}
            & Gradient & 0.961 ± 0.146 & 0.913 ± 0.126 \\
            & IG\cite{IG} & 0.962 ± 0.146 & 0.913 ± 0.127 \\
            & IDG \cite{idg} & 0.850 ± 0.234 & 0.810 ± 0.191 \\
            & EG \cite{EG}  & 0.962 ± 0.146 & 0.913 ± 0.127 \\
            & \textbf{CIG (ours)} & \textbf{0.965 ± 0.128} & \textbf{0.913 ± 0.130} \\
        \bottomrule
    \end{tabular}
    \caption{Attribution performance on \textbf{tumor-positive} slides from the \textbf{Camelyon16} dataset, evaluated using MIL-AIC and MIL-SIC metrics}
    \label{tab:camelyon16_tumor}
\end{table}

\textbf{Classification Models.} To assess how attribution varies across architectures, we evaluate two MIL models trained to classify both binary and tumor subtypes from WSIs. Gradient-based attribution scores are computed from each model’s output. We compare: (1) a simple MLP bag classifier (\textit{MLP}) with three fully connected layers (512, 256, 128), ReLU activations, dropout, attention pooling, and a final classification layer; and (2) \textit{CLAM}~\cite{clam_lu2021data}, a widely used attention-based MIL model. Both models use patch-level features extracted with a pre-trained ResNet-50 and follow the same preprocessing steps as in~\cite{clam_lu2021data}. Each model is trained for 200 epochs, and final slide-level predictions are used for attribution. For Camelyon16, we use the provided train/test split, while for TCGA-Renal and TCGA-Lung, we create train/val/test splits. All datasets use patient-level separation to prevent data leakage. In the supplementary materials, we evaluate CLAM with CONCH~\cite{lu2024visual} features in place of ResNet-50 to examine the effect of stronger pathology representations on attribution.

\begin{table*}[h]
    \centering
    \renewcommand{\arraystretch}{1.2}
    \setlength{\tabcolsep}{4pt}
    \begin{tabular}{c|l|cc|cc|cc}
        \toprule
        \multirow{2}{*}{\textbf{Classifier}} & \multirow{2}{*}{\textbf{Method}} & \multicolumn{2}{c|}{\textbf{pRCC}} & \multicolumn{2}{c|}{\textbf{ccRCC}} & \multicolumn{2}{c}{\textbf{chRCC}} \\
        \cmidrule(lr){3-4} \cmidrule(lr){5-6} \cmidrule(lr){7-8}
        & & MIL-AIC~↑ & MIL-SIC~↑ & MIL-AIC~↑ & MIL-SIC~↑ & MIL-AIC~↑ & MIL-SIC~↑ \\
        \midrule
        \multicolumn{2}{c|}{Random} & 0.092 ± 0.037 & 0.093 ± 0.036 & 0.276 ± 0.155 & 0.276 ± 0.155 & 0.088 ± 0.033 & 0.091 ± 0.033 \\
        \midrule
        \multirow{5}{*}{CLAM}
            & Gradient & 0.096 ± 0.031 & 0.098 ± 0.030 & 0.730 ± 0.379 & 0.740 ± 0.364 & 0.100 ± 0.029 & 0.099 ± 0.027 \\
            & IG\cite{IG} & 0.093 ± 0.035 & 0.095 ± 0.034 & 0.729 ± 0.377 & 0.739 ± 0.363 & 0.092 ± 0.036 & 0.093 ± 0.034 \\
            & IDG\cite{idg_walker_integrated_2023}  & 0.196 ± 0.040 & 0.196 ± 0.039 & 0.635 ± 0.284 & 0.638 ± 0.277 & 0.198 ± 0.031 & 0.198 ± 0.030 \\
            & EG\cite{EG}  & 0.097 ± 0.034 & 0.098 ± 0.033 & 0.730 ± 0.378 & 0.740 ± 0.363 & 0.094 ± 0.030 & 0.094 ± 0.029 \\
            & \textbf{CIG (ours)} & \textbf{0.204 ± 0.044} & \textbf{0.206 ± 0.043} & \textbf{0.776 ± 0.297} & \textbf{0.783 ± 0.286} & \textbf{0.209 ± 0.035} & \textbf{0.210 ± 0.035} \\
        \midrule
        \multirow{5}{*}{MLP}
            & Gradient & 0.122 ± 0.049 & 0.126 ± 0.053 & 0.535 ± 0.357 & 0.530 ± 0.333 & 0.110 ± 0.041 & 0.113 ± 0.033 \\
            & IG\cite{IG} & 0.116 ± 0.051 & 0.121 ± 0.055 & 0.536 ± 0.359 & 0.531 ± 0.335 & 0.113 ± 0.029 & 0.117 ± 0.028 \\
            & IDG\cite{idg_walker_integrated_2023}  & 0.218 ± 0.051 & 0.221 ± 0.056 & 0.442 ± 0.233 & 0.455 ± 0.222 & 0.207 ± 0.018 & 0.207 ± 0.015 \\
            & EG \cite{EG}  & 0.119 ± 0.047 & 0.124 ± 0.053 & 0.535 ± 0.360 & 0.531 ± 0.334 & 0.114 ± 0.035 & 0.115 ± 0.034 \\
            & \textbf{CIG (ours)} & \textbf{0.226 ± 0.059} & \textbf{0.228 ± 0.060} & \textbf{0.547 ± 0.293} & \textbf{0.557 ± 0.270} & \textbf{0.221 ± 0.021} & \textbf{0.221 ± 0.017} \\
        \bottomrule
    \end{tabular}
    \caption{Attribution performance on each class from the \textbf{TCGA-RCC} dataset, evaluated using MIL-AIC and MIL-SIC metrics.}
    \label{tab:tcga_rcc_classes}
\end{table*}
\begin{table*}[h]
    \centering
    \renewcommand{\arraystretch}{1.2}
    \setlength{\tabcolsep}{4pt}
    \begin{tabular}{c|l|cc|cc}
        \toprule
        \multirow{2}{*}{\textbf{Classifier}} & \multirow{2}{*}{\textbf{Method}} & \multicolumn{2}{c|}{\textbf{LUAD}} & \multicolumn{2}{c}{\textbf{LUSC}} \\
        \cmidrule(lr){3-4} \cmidrule(lr){5-6}
        & & MIL-AIC~↑ & MIL-SIC~↑ & MIL-AIC~↑ & MIL-SIC~↑ \\
        \midrule
        \multicolumn{2}{c|}{Random} & 0.147 ± 0.037 & 0.148 ± 0.035 & 0.200 ± 0.070 & 0.200 ± 0.068 \\
        \midrule
        \multirow{5}{*}{CLAM}
            & Gradient & 0.165 ± 0.129 & 0.167 ± 0.122 & 0.396 ± 0.357 & 0.405 ± 0.352 \\
            & IG\cite{IG}  & 0.162 ± 0.131 & 0.164 ± 0.124 & 0.396 ± 0.355 & 0.405 ± 0.352 \\
            & IDG\cite{idg_walker_integrated_2023}  & 0.261 ± 0.098 & 0.265 ± 0.095 & 0.424 ± 0.204 & 0.427 ± 0.199 \\
            & EG \cite{EG} & 0.164 ± 0.131 & 0.167 ± 0.125 & 0.395 ± 0.356 & 0.404 ± 0.352 \\
            & \textbf{CIG (ours)} & \textbf{0.312 ± 0.184} & \textbf{0.315 ± 0.184} & \textbf{0.603 ± 0.315} & \textbf{0.607 ± 0.308} \\
        \midrule
        \multirow{5}{*}{MLP}
            & Gradient & 0.140 ± 0.043 & 0.144 ± 0.042 & 0.676 ± 0.383 & 0.684 ± 0.351 \\
            & IG \cite{IG} & 0.139 ± 0.039 & 0.145 ± 0.038 & 0.672 ± 0.381 & 0.680 ± 0.351 \\
            & IDG \cite{idg_walker_integrated_2023} & \textbf{0.302 ± 0.107} & \textbf{0.340 ± 0.112} & 0.644 ± 0.311 & 0.656 ± 0.298 \\
            & EG \cite{EG} & 0.141 ± 0.036 & 0.147 ± 0.036 & 0.672 ± 0.382 & 0.680 ± 0.351 \\
            & \textbf{CIG (ours)} & 0.281 ± 0.113 & 0.303 ± 0.113 & \textbf{0.759 ± 0.296} & \textbf{0.765 ± 0.277} \\
        \bottomrule
    \end{tabular}
    \caption{Attribution performance on each class from the \textbf{TCGA-Lung} dataset, evaluated using MIL-AIC and MIL-SIC metrics.}
    \label{tab:tcga_lung_classes}
\end{table*}

\textbf{Methods under comparison.} For interpretability, we benchmark several gradient-based attribution methods Vanilla Gradient, Integrated Gradients (IG) \cite{IG}, Expected Gradients (EG) \cite{EG}, Integrated Decision Gradients (IDG) \cite{idg_walker_integrated_2023}, and our proposed Contrastive Integrated Gradients (CIG) as well as a \textit{Random} baseline.  
All path-based methods (IG, EG, IDG, CIG) use 50 interpolation steps. The \textit{Random} denote random baseline assigns uniform scores to patches, and its MIL-AIC and MIL-SIC metrics are computed using the CLAM outputs. To generate saliency maps, we compute a per-patch attribution score by averaging the absolute values across all feature dimensions, producing a single importance value that is mapped back to the patch’s spatial location. Using absolute values captures both positive and negative influences while the magnitude reflects the strength of the model’s response, highlighting the most impactful regions.

\textbf{Information-level bins.} To evaluate how model predictions evolve as informative regions are added, we use two complementary axes: \textit{top-$k$} and \textit{saliency thresholds}. The \textit{top-$k$} axis captures early-stage changes using a dense-to-sparse sequence: $k = 1$–10, 15–50 (step size 5), 60–100 (step size 10), and then 150, 200, 300, 400, 500. This helps assess how quickly predictions change as highly salient patches are introduced. The \textit{saliency threshold} axis uses percentile cutoffs (e.g., 20\%, 40\%, 60\%, 80\%, 85\%, 90\%, 95\%, 99\%) to evaluate later-stage transitions and completeness as more of the image is revealed.

 

\textbf{Baseline Construction.} For path-based attribution methods, we construct contrastive baselines by sampling patch features from 30 slides of the opposite class. For each target slide, an equal number of patches are randomly selected from each reference slide and aggregated to form a shared baseline pool. All IG-based methods use the same baseline per input. Vanilla Gradient does not require a baseline.

\textbf{Evaluation Metrics.} We evaluate performance using MIL-AIC and MIL-SIC, where higher scores reflect better class discrimination and instance consistency, indicating stronger alignment between attribution maps and decision-relevant regions.
\subsection{Quantitative Results} 
\begin{figure*}[h]
    \centering
    \includegraphics[width=1\textwidth]{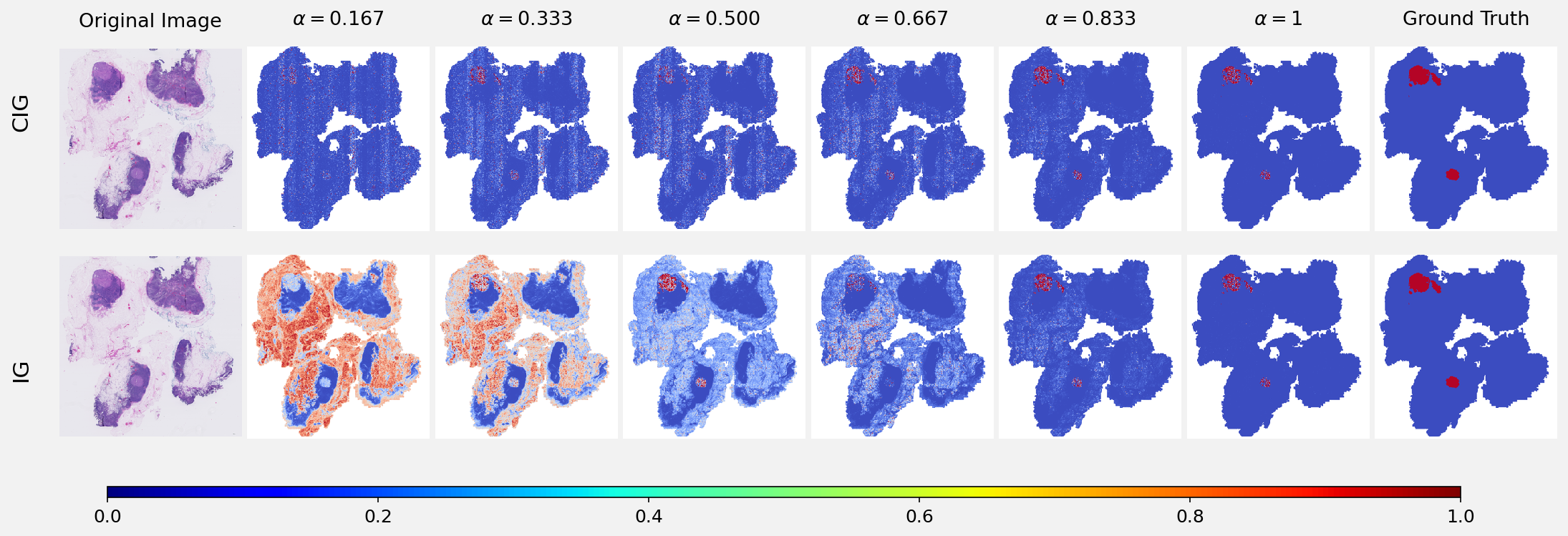}
    \caption{Comparison of Integrated Gradients (IG) and Contrastive Integrated Gradients (CIG) across interpolation steps (\( \alpha \)), each row shows intermediate gradient maps at increasing \( \alpha \) values, from 0.167 to 1.0, illustrating how gradients evolve along the interpolation path. Note that the final heatmap (\( \alpha = 1 \)) shows only the gradient at the last step and is not the complete attribution result. The full attribution is computed by summing the gradients across all \( \alpha \) values and multiplying by the input difference \( x' - x \). CIG produces more stable and localized gradients in tumor regions throughout the path, while IG exhibits more dispersed patterns.}
    \label{fig:contrs_vs_grad}
\end{figure*} 

\begin{figure*}[h]
    \centering
    \includegraphics[width=1\textwidth]{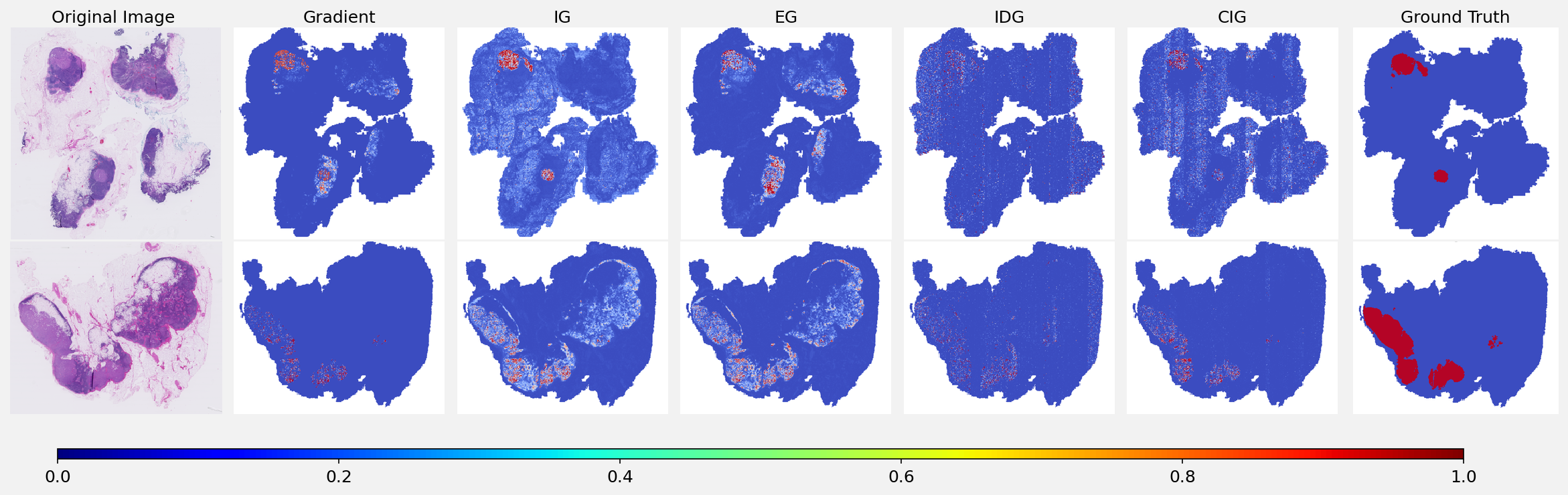}  
    \caption{Qualitative comparison of attribution maps generated by five saliency methods (Vanilla Gradient, IG\cite{IG}, EG \cite{EG}, IDG\cite{idg_walker_integrated_2023}, CIG) on Camelyon16 tumor slides. The first column shows the original WSI patch, and the last column shows the ground truth mask. CIG produces more focused and accurate attributions that align closely with tumor regions.}
    \label{fig:qualitative_attribution}
\end{figure*}   
As shown in Table~\ref{tab:camelyon16_tumor}, CIG achieves the highest performance across both classifiers in terms of MIL-AIC and MIL-SIC. For CLAM, it scores \textit{0.950 ± 0.166} (MIL-AIC) and \textit{0.945 ± 0.128} (MIL-SIC); for MLP, it reaches \textit{0.965 ± 0.128} and \textit{0.913 ± 0.130}, respectively. These results indicate that CIG more effectively highlights decision-relevant tumor regions.

Table \ref{tab:tcga_rcc_classes} reports attribution results on the TCGA-RCC dataset across three renal subtypes. CIG generally achieves strong performance across both models. For pRCC, CIG with the MLP model reaches \textit{0.226 ± 0.059} (MIL-AIC) and \textit{0.228 ± 0.060} (MIL-SIC). For ccRCC, CIG with CLAM performs well with \textit{0.776 ± 0.297} (MIL-AIC) and \textit{0.783 ± 0.286} (MIL-SIC). These results suggest CIG works reliably across subtypes and model types.

Table \ref{tab:tcga_lung_classes} shows attribution results on the TCGA-Lung dataset for LUAD and LUSC subtypes. For LUSC, CIG achieves the highest scores with CLAM (\textit{0.759 ± 0.296} MIL-AIC, \textit{0.765 ± 0.277} MIL-SIC). In LUAD, while IDG performs well with the MLP model, CIG offers a strong balance between consistency and class relevance.

Across all datasets and models, CIG generally achieves strong attribution quality, often ranking among the top-performing methods. These results suggest that combining contrastive baselines with path-based sensitivity can enhance the consistency and relevance of feature attributions. CIG appears effective at identifying important regions across different classifier types and cancer subtypes, supporting its broader applicability in weakly supervised WSI settings.
 To better understand attribution behavior, we present a qualitative analysis on selected Camelyon16 tumor slides. We compare CIG with Vanilla Gradient, IG~\cite{IG}, EG~\cite{EG}, and IDG~\cite{idg_walker_integrated_2023}, and also examine how gradients evolve across interpolation steps in IG and CIG.

\subsection{Qualitative Results}


Figures~\ref{fig:contrs_vs_grad} and~\ref{fig:qualitative_attribution} highlight key distinctions in attribution behavior. Figure~\ref{fig:contrs_vs_grad} presents intermediate gradient maps from IG and CIG across interpolation steps \( \alpha \) (ranging from 0.167 to 1.0), illustrating how sensitivity evolves along the path from baseline to input. These maps reflect individual gradient snapshots rather than the final integrated attribution. Throughout the path, CIG gradients tend to be more localized and consistent in tumor regions. In comparison, IG gradients appear more spatially distributed, potentially reflecting a broader sensitivity to feature variations that may or may not contribute directly to the model's decision.


Figure~\ref{fig:qualitative_attribution} shows the final attribution maps for each method. Patch-level importance scores are aggregated and projected back onto slide coordinates for comparison with tumor annotations. In some examples, IG and EG highlight additional regions not included in the ground truth mask, potentially reflecting sensitivity to visually salient but less discriminative features. In contrast, CIG consistently emphasizes areas that closely align with annotated tumor regions. These visual patterns align with CIG’s stronger quantitative results, suggesting it offers more focused and decision-relevant attributions.

\section{Conclusion and Future Work}

In summary, we introduce Contrastive Integrated Gradients (CIG), a feature attribution method that enhances interpretability by computing contrastive gradients in logit space. CIG identifies class-discriminative regions, satisfies integrated attribution axioms, and is evaluated with two new metrics, MIL-AIC and MIL-SIC, for weakly supervised settings. Across three cancer datasets, CIG consistently produces more informative and stable attributions than existing methods, both quantitatively and qualitatively. By improving interpretability, CIG supports more explainable AI systems and more consistent decision-making in computational pathology. As future work, we plan to incorporate rigorous human-subject evaluations of interpretability.

\section{Acknowledgments.}
 This work was supported by the National Institutes of Health (NIH) under Grant 5R01DK134055-02.  
{\small
\bibliographystyle{ieee_fullname}
\bibliography{ref}
}

\end{document}